\DeclareMathOperator{\Forall}{\forall}
\newtheorem{theorem}{Theorem}
\newtheorem{remark}{Remark}
\newcommand{\norm}[1]{\left\lVert#1\right\rVert}
\title{\LARGE \bf
Trajectory Tracking for Quadrotors with Attitude Control on $\mathcal{S}^2 \times \mathcal{S}^1$
}
\author{Dave Kooijman, Angela P. Schoellig, and Duarte J. Antunes
\thanks{Dave Kooijman and Duarte Antunes are with the Control Systems Technology group, Mechanical Engineering, Eindhoven University of Technology, The Netherlands. Emails: { dave.kooijman@robotics.utias.utoronto.ca}, {d.antunes@tue.nl}}
\thanks{Angela P. Schoellig is with the Dynamic Systems Lab (www.dynsyslab.org) at the University of Toronto Institute for Aerospace Studies (UTIAS), Canada. 	Email: { schoellig@utias.utoronto.ca}}%
\thanks{This research has been partially funded by the European Regional Development Fund (ERDF) as part of OPZuid 2014-2020 under the Drone Safety Cluster project.}
}
\begin{document}
\maketitle
\thispagestyle{empty}
\pagestyle{empty}

\begin{abstract}
The control of a quadrotor is typically split into two subsequent problems: finding desired accelerations to control its position, and controlling its attitude and the total thrust to track these accelerations and to track a yaw angle reference. While the thrust vector, generating accelerations, and the angle of rotation about the thrust vector, determining the yaw angle, can be controlled independently, most attitude control strategies in the literature, relying on representations in terms of quaternions, rotation matrices or Euler angles, result in an unnecessary coupling between the control of the thrust vector and of the angle about this vector. This leads, for instance, to undesired position tracking errors due to yaw tracking errors. In this paper we propose to tackle the attitude control problem using an attitude representation in the Cartesian product of the 2-sphere and the 1-sphere, denoted by $\mathcal{S}^2\times \mathcal{S}^1$. We propose a non-linear tracking control law on $\mathcal{S}^2\times \mathcal{S}^1$ that decouples the control of the thrust vector and of the angle of rotation about the thrust vector, and guarantees almost global asymptotic stability. Simulation results highlight the advantages of the proposed approach over previous approaches.
\end{abstract}

\section{Introduction}
\label{sec:introduction}
\par Quadrotors have been the focus of much research in the past decade~\cite{kumar:12,Schoellig:2012,hoffmann:11,Cai2014}. Typically, the controller has a cascade architecture consisting of an inner- and an outer-loop controller for attitude and position control, respectively, see, e.g.,~\cite{Mahony2012} and \cite{Invernizzi2018}. The outer-loop controller is designed for a simple double integrator model, providing virtual accelerations to control the quadrotor's positions. The actual accelerations result from the inner-loop controller, which controls the attitude of the quadrotor and the total thrust 
in order to track these accelerations and to track a yaw angle reference. An overview of this approach is given in Figure~\ref{fig:overview}.
\par The strategies to control the attitude of the quadrotor, see e.g.~\cite{Tayebi2006, Carino2015, Chonancova2016,Lee2010, Lefeber2017,casau:15}, i.e., to design the inner loop, often resort to general methods to control the attitude of a rigid body~\cite{wen:91,joshi:95,Mayhew2011,Fragopoulos2004}. 
For instance, the quaternion-based attitude control strategies in~\cite{wen:91} and~\cite{joshi:95} are applied to quadrotors in~\cite{Tayebi2006, Carino2015, Chonancova2016}; control strategies on the special orthogonal group, described, e.g., in~\cite{sanyal:08}, are applied in~\cite{Lee2010} and~\cite{Lefeber2017}; and a quaternion-based hybrid control law for attitude tracking in~\cite{Mayhew2011} is applied in~\cite{casau:15}. 
Note that, as stated in~\cite{Bath2000}, any continuous state-feedback control law, using local coordinates, is not globally well defined. This leads to unwinding, where the controller unnecessarily rotates the attitude through large angles, instead of global asymptotic stability. Therefore, it is impossible to stabilize any equilibrium point in the 3D rotations manifold, motivating discontinuous control laws (see, e.g., \cite{Lee2010}) and hybrid control laws (see, e.g.,~\cite{Mayhew2011}).

\begin{figure}[t]
	\centering
	\includegraphics[width=1.0\linewidth]{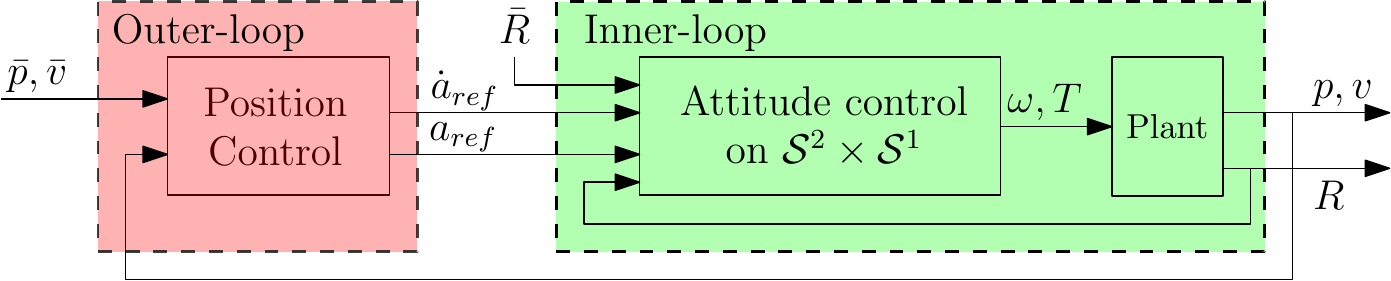}
	\caption{Control scheme with an outer-loop position controller generating reference accelerations, and an inner-loop attitude controller on $\mathcal{S}^2 \times \mathcal{S}^1$; $p,v,R$, represent position, velocity and attitude, and $\bar{p},\bar{v},\bar{R}$ corresponding references; $a_{ref}$ is a desired acceleration reference, $\omega$ is the angular velocity and $T$ is the total thrust. A yaw reference $\bar{\psi}$, along with $a_{ref}$, can determine $\bar{R}$ , see Section~\ref{sec:2_2}.}
	\label{fig:overview}
\end{figure}

\par However, these general attitude control strategies are not necessarily the most suitable to address the inner-loop quadrotor control problem. In fact, while the thrust vector of the quadrotor can be controlled independently of the angle of rotation about the thrust vector, which defines the yaw angle, these attitude control laws consider controlling the full attitude. This results in an unnecessary coupling between the control of the thrust vector and of the rotation about this vector. This is undesirable since a yaw error can result in a position error, and this is particularly noticeable for ``slow'' quadrotors where the ratio between the maximum torque that can be applied about the thrust vector and the moment of inertia about this axis is small.
\par In this paper, we propose to tackle the attitude tracking problem with a different attitude parameterization intended to decouple the control of the thrust vector from the control of the angle about this vector. To this end, we provide a convenient homeomorphism between $$\mathcal{SO}(3)\backslash \mathcal{R}_c, \ \ \mathcal{R}_c := \left\{ \begin{bmatrix}
-\alpha & \beta & 0 \\ \beta & \alpha & 0 \\ 0 & 0 & -1
\end{bmatrix} \mid \\ \left[\alpha \ \beta\right]^\top \in \mathcal{S}^1\right\}, $$
 and 
$$\left( \mathcal{S}^2 \backslash \{-e_3\}\right)\times \mathcal{S}^1, \ \ { e}_3 := \begin{bmatrix}
0 & 0 & 1
\end{bmatrix}^\top,$$
where $\mathcal{SO}(3) := \left\{R \in \mathbb{R}^{3\times3}|R^\top R =\mkern-3mu RR^\top \mkern-3mu =I, \det R=1\right\}$ is the {\it special orthogonal group} of order three, $\mathcal{S}^2$ is the 2-sphere and $\mathcal{S}^1$ is the 1-sphere, formally $\mathcal{S}^{n}:=\{x\in \mathbb{R}^{n+1}|x^\top x=1\}$, for $n \in \{1,2\}$. 
Where $\mathcal{R}_c$ can be interpreted as the set of orientations considering all rotations around the vector $-e_3$, these orientations are excluded to be able to define a homeomorphism.
This homeomorphism can be intuitively explained as follows. Given a rotation matrix in the special orthogonal group, which is not in $\mathcal{R}_c$, take the third column $r_3$, which belongs to $\mathcal{S}^2$, and the representation of the first column in the vectors of the orthogonal space to ${r}_3$ defined by the parallel transport of ${e}_1:=\begin{bmatrix}
1 & 0 & 0
\end{bmatrix}^\top$ and ${ e}_2:=\begin{bmatrix}
0 & 1 & 0
\end{bmatrix}^\top$ along the geodesic on $\mathcal{S}^2$ between ${e}_3$ and ${ r}_3$ (see Figure~\ref{fig:r3_ff}). Note that, for $({ v},w)\in \mathcal{S}^2\times \mathcal{S}^1$, with ${v} \neq -{e}_3$, ${ v}$ corresponds to the normalized thrust vector, and is equal to $e_3$ when hovering, and $w$ determines the angle about the thrust vector. 

\begin{figure}[t]
	\centering
	\includegraphics[width=.9\linewidth]{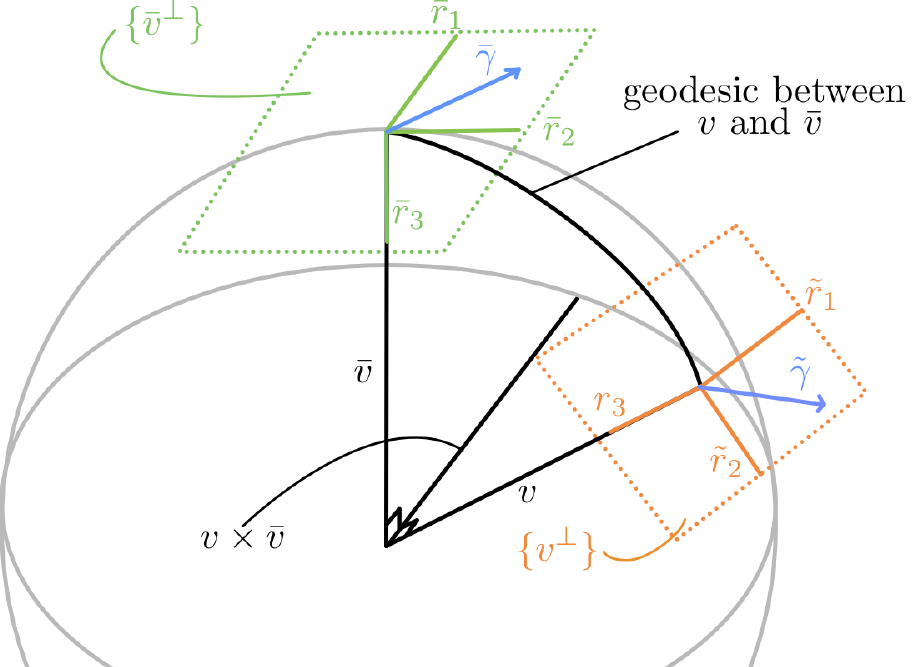} 
	\caption{Illustration of the parallel transport of vectors $\bar{r}_1$, $\bar{r}_2$ and $\bar{\gamma}$ from the tangent space to $\bar{v}=\bar{r}_3$ to the tangent space to $v=r_3 \neq -\bar{r}_3$ along the curve coinciding with the geodesic on $\mathcal{S}^2$ from $v=r_3$ to $\bar{v}=\bar{r}_3$. This results in vectors $\tilde{r}_1$, $\tilde{r}_2$ and $\tilde{\gamma}$ in the tangent space to $r_3$; $\bar{\gamma}$ is a general vector. Suppose that $\bar{v}=\bar{r}_3=e_3$ and $\bar{r}_1=e_1$, $\bar{r}_2=e_2$. If $\tilde{\gamma}=r_1$, its local coordinates in basis $\{\tilde{r}_1,\tilde{r}_2\}$ belong to $\mathcal{S}^1$ and together with $v$ fully characterize the rotation matrix from $\{r_1,r_2,r_2\}$ to $\{\bar{r}_1,\bar{r}_2,\bar{r}_3\}$ (homeomorphism between $\mathcal{S}\mathcal{O}(3)\backslash \mathcal{R}_c$ and $ \left( \mathcal{S}^2 \backslash \{-e_3\}\right)\times \mathcal{S}^1$). This figure is also useful to visualize the feedforward signal~\eqref{eq:ff}, which results from considering $\bar{\gamma}=\dot{\bar{r}}_3$ and $\tilde{\gamma} = \Theta \dot{\bar{r}}_3 $. }
	\label{fig:r3_ff}
\end{figure}

\par We then show that by applying a suitable input transformation, $v$ and $w$ can be controlled independently. We propose tracking control laws for the tracking subproblems in $\mathcal{S}^2$ and $\mathcal{S}^1$, considering the angular velocity as an input to the quadrotor. The proposed control law avoids coupling by construction. For differentiable attitude references, asymptotic convergence of the attitude tracking errors to zero as time converges to infinity is established, for every initial condition except at a single point, which has zero Lebesgue measure (following, e.g.,~\cite{monzon2003} we call this {\it almost global stability}).
\par In order to show the advantages of the new inner-loop control approach for quadrotor control, we introduce a simple trajectory tracking control law for the outer-loop generating acceleration references. 
The advantages of the proposed control law are then highlighted through simulations.
\par At the time of writing we found similar decoupling approaches for the attitude control of quadcopters in the literature (see \cite{Casau2015,Mueller2018,Brescianini2018,Markdahl2017}). Our approach differs from previous work in the parametrization we propose, which we believe provides a more natural way to define the rotation about the thrust vector.
\par This paper is organized as follows. Section~\ref{sec:problem_formulation} introduces the problem. The proposed approach is presented in Section~\ref{sec:inner_loop}. Section~\ref{sec:outer_loop} discusses an outer-loop trajectory tracking strategy to control the overall model. Section~\ref{sec:simulations} provides simulation results that show the advantages of the proposed solution compared to previous attitude control strategies for quadcopters, and conclusions are provided in Section~\ref{sec:conclusion}.

\section{Problem formulation}
\label{sec:problem_formulation}

\par Let $\{A\}$ denote a world fixed frame, and $\{B\}$ a body fixed frame centered at the vehicle's center of mass. The third axis of $\{A\}$, denoted by $e_3$, is assumed to be aligned with the gravity vector, as is the third axis of $\{B\}$, denoted by $r_3$,  when the quadrotor is hovering. Moreover, let ${ p} \in \mathbb{R}^3$ and ${ v} \in \mathbb{R}^3$ denote the position and velocity, respectively, of the center of mass of the quadrotor expressed in $\{A\}$.  Let $R = \left[ {\begin{array}{*{20}{c}} { r}_1 & { r}_2 & { r}_3 \end{array}} \right] \in \mathcal{SO}(3)$  denote the attitude of $\{B\}$ relative to $\{A\}$.
\par The forces acting on the quadrotor and expressed in the world fixed frame are assumed to be the gravity force, $mg{e}_3$, where $m\in \mathbb{R}$ and $g \in \mathbb{R}$ denote the mass and gravitational acceleration; the thrust force $-TR{ e}_3$, where $T \in \mathbb{R}_{> 0}$ denotes the total thrust generated by the blades. Therefore,
\begin{equation}\label{eq:quadrotor_model}
\begin{aligned}
\dot { p} &= { v}\\
m \dot { v} &= mg{ e}_3 - TR{ e}_3  \\
\dot R &= RS(\omega),
\end{aligned}
\end{equation}
where $\omega = \begin{bmatrix} \omega_1 & \omega_2 & \omega_3 \end{bmatrix}^\top \in \mathbb{R}^{3}$ is the angular velocity expressed in $\{B\}$, and for any $z \in \mathbb{R}^3$, we let
\begin{equation*}
{S(z)} := \left[ {\begin{array}{*{20}{c}}
	0&-z_3&z_2\\
	z_3&0&-z_1\\
	-z_2&z_1&0
	\end{array}} \right]\,.
\end{equation*}
\par The thrust vector, defined as $-TR{ e}_3=-T{ r}_3$, can then be controlled by controlling the last column of the rotation matrix. Often in the literature, and also here 
one assumes that the angular velocity can be directly controlled by controlling the motor speeds, i.e., $\omega$ is the control input of the model. Alternatively, one can consider that the thrust generated by each propeller generates torques, denoted by $\tau \in \mathbb{R}^3$, which are related to angular velocities by 
\begin{equation}\label{eq:quadrotor_model_d}
\mathcal{I}\dot\omega = -S(\omega) (\mathcal{I}\omega) + \tau,
\end{equation}
\noindent where
$\mathcal{I} \in \mathbb{R}^{3\times3}$ denotes the inertia matrix.


\subsection{Reference trajectory}\label{sec:2_2}
\par Consider a reference $(\bar{p}(t),\bar{v}(t),\bar{R}(t),\bar{\omega}(t))$ continuous in time $t
\in \mathbb{R}_{\geq 0}$ that satisfies~\eqref{eq:quadrotor_model} for a given initial condition, i.e., a reference that the quadrotor can track exactly and such that the corresponding total thrust satisfies
\begin{equation}\label{as:ref}
\bar{T} = \bar{r}_3^\top (-m \dot {\bar v} +mg{ e}_3) > 0,\ \forall t \in \mathbb{R}_{\ge 0}.
\end{equation} 
\par This defines a broad class of references, which we can consider with the tools provided in this paper, allowing acrobatic maneuvers where the angle between the gravity vector and $r_3$ is arbitrary. However, it will also be convenient to define the following subclass of references which we can easily parameterize and for which we can easily define a yaw angle relying on its orientation with respect to the gravity vector. These are
references for which the angle between the normalized reference thrust vector, i.e., $\bar{r}_3$, and the gravity vector does not exceed $\frac{\pi}{2}$, i.e.,
\begin{equation}\label{eq:r3restriction}
0 < e_3^\top \bar{r}_3 \le 1,
\end{equation}
for every $t\in \mathbb{R}_{\geq 0}$. We can parameterize this class with the position reference $\bar{ p}$ and an auxiliary angle $\bar \psi \in (-\pi,\pi]$, which plays the role of the yaw reference angle. The position reference $\bar{ p}(t)$, for which the first three derivatives are assumed to exist if~\eqref{eq:quadrotor_model} is considered or the first four derivatives are assumed to exist if~\eqref{eq:quadrotor_model},~\eqref{eq:quadrotor_model_d} are considered,  characterizes the third column of the rotation matrix $\bar{ r}_3$, and the thrust $\bar{T}$, since, from the second equation in \eqref{eq:quadrotor_model} we can write
\begin{equation}\label{eq:r3bar_}
\bar{r}_3 = \frac{b}{\|{ b}\|}, \ \ \bar{T}= m\|{ b}\|, \ \ { b}:=-\ddot{\bar{{ p}}}+ g{ e}_3.
\end{equation}
The reference angle $\bar \psi$, for which the first derivative is assumed to exist if~\eqref{eq:quadrotor_model} is considered or the first two derivatives are assumed to exist if~\eqref{eq:quadrotor_model},~\eqref{eq:quadrotor_model_d} are considered, is defined as the angle between ${e}_1$ and the projection of $\bar {r}_1$ on $({ e}_1, { e}_2)$. Therefore, it defines ${ \bar{r}}_1$ through
$$ \bar{r}_1  =  \frac{S(\bar r^\perp_{\psi}) \bar{r}_3}{s(\bar r^\perp_{\psi}, \bar{r}_3)},  \ \ \bar r^\perp_{\psi} := \begin{bmatrix} -\sin(\bar \psi) & \cos(\bar \psi) & 0 \end{bmatrix}^\top,$$
where, for $a,b \in \mathbb{R}^3$, $s({ a},{ b}):=\sqrt{1-({ a}^\top { b})^2}$.
Given ${ \bar{r}}_1$ and ${ \bar{r}}_3$, ${ \bar{r}}_2 = { S(\bar{r}}_3) { \bar{r}}_1$ and from $\dot{\bar{ R}}=\bar{ R}S(\bar{ \omega})$ we obtain
\begin{equation}\label{eq:r2omega}\bar{\omega}_{1}  = -\bar{r}_{2}^\top \dot{\bar{r}}_{3} , \ \ 
\bar{\omega}_{2}  =\bar{r}_{1}^\top \dot{\bar{r}}_{3}, \ \
\bar{\omega}_{3}  = \bar{r}_{2}^\top \dot{\bar{r}}_{1}.\end{equation}
\noindent If~\eqref{eq:quadrotor_model},~\eqref{eq:quadrotor_model_d} are considered, then
$
\bar{\tau} = \mathcal{I}\dot{\bar{\omega}} +S(\bar{\omega})(\mathcal{I}\bar{\omega}).
$

\subsection{Cascade trajectory tracking problem }\label{sec:2_3}
\par Following a common cascaded control design approach, we divide the tracking control problem into two subproblems: {\it(i)} find a virtual acceleration $a_{ref} \in \mathbb{R}^3$ to control the model
\begin{equation}\label{eq:f}
\dot p = v, \quad \dot v =a_{ref},
\end{equation}
such that the tracking errors $\tilde { p} = \bar { p} - { p}$ and $\tilde { v} = \bar { v} - { v}$ converge to zero, and {\it (ii)} find either $\omega$ and $T$ if the considered quadrotor model is~\eqref{eq:quadrotor_model} or $\tau$ and $T$ if \eqref{eq:quadrotor_model},~\eqref{eq:quadrotor_model_d}  are considered, such that the forces applied to the quadrotor (normalized by mass) track the desired virtual acceleration ${ a}_{ref}$, i.e., the acceleration error 
\begin{equation}
\tilde a := \left[ {\begin{array}{*{20}{c}} 0& 0& g \end{array}} \right]^\top + R \left[ {\begin{array}{*{20}{c}} 0& 0& \frac{-T}{m} \end{array}} \right]^\top - { a}_{ref},
\label{eq:SO3_error}
\end{equation} converges to zero, and $\bar R^\top R$ converges to the identity as time converges to infinity. We will denote the outer-loop control problem by {\it (i)}, and the inner-loop control problem by {\it (ii)}.
\par Note that driving~\eqref{eq:SO3_error} to zero constraints $r_3$ to coincide with the reference vector $\bar{r}_3$ and once such a condition is met, one can replace the goal $\bar R^\top R \rightarrow I$ by simply $ \bar r_1^\top r_1=1 \text{ and } \bar r_2^\top r_2=1,$
which is equivalent to stating that the angle of rotation about the normalized thrust vector (i.e., $r_3=\bar{r}_3$ when $\tilde a=0$) must coincide with the desired value. When~\eqref{eq:r3restriction} is met, this means that the yaw angle of the quadrotor coincides with the desired yaw reference angle $\bar \psi$.

\begin{remark}\label{rem:1}
	The inner-loop control problem for quadrotors is typically tackled directly in $\mathcal{SO}(3)$, or quaternion or Euler angles representations.  For instance, letting $\tilde{R}\in \mathcal{SO}(3)$ represent a reference attitude and $\tilde R:= \bar R^\top R $, the feedback controller (see \cite{Chaturvedi2011,Lefeber2017})
	\begin{equation}
	\omega(\tilde R) = K \sum_{i=1}^3  k_i { e}_i \times (\tilde R { e}_i),
	\label{eq:continuous_SO3_law}
	\end{equation}
	where $K = K^\top > 0 \in \mathbb{R}^{3\times 3}$, and  $k_i > 0 \in \mathbb{R}$ are distinct (e.g., $0 < k_1 < k_2 < k_3$)
	is such that $\tilde R(0)$ converges to $I$ for every initial condition $\tilde R(0) \in \mathcal{SO}(3) \backslash \{ \text{diag}(1,-1,-1),\text{diag}(-1,1,-1),\text{diag}(-1,-1,1)\}$. Moreover, a globally stabilizing discontinuous feedback quaternion-based law can be found in~\cite{Mayhew2011,Fragopoulos2004}. However, the position control of a quadrotor is only dependent on the third column of the rotation matrix (collinear with the thrust vector), as one attempts to drive the error~\eqref{eq:SO3_error} to zero. While the method we propose  drives the third column of $R$ to a desired value and controls independently  the angle about the third column of $R$ so that $R$ converges to a desired reference, both of these methods do not control these two quantities independently leading to undesired behaviors (see Section~\ref{sec:simulations}).
\end{remark} 
\section{Inner loop reference tracking in $\mathcal{S}^2\times \mathcal{S}^1$}
\label{sec:inner_loop}
\par We start by introducing, in Section~\ref{sec:3_1}, a convenient parametrization of $\mathcal{S}\mathcal{O}(3)\backslash \mathcal{R}_c$, by defining a homeomorphism between $\mathcal{S}\mathcal{O}(3)\backslash \mathcal{R}_c$  and $\left( \mathcal{S}^2 \backslash \{-e_3\}\right)\times \mathcal{S}^1$. In Section~\ref{sec:3_2} we define an input transformation that allows us to decouple the control of the normalized thrust vector on $\mathcal{S}^2\backslash \{-e_3\}$ from the control of the angle of rotation about the thrust vector, specified by a vector on $\mathcal{S}^1$. In Section~\ref{sec:3_3}, we consider the normalized thrust vector control problem in  $\mathcal{S}^2$, and in Section~\ref{sec:3_4}, we consider the control of the angle about the thrust vector in $\mathcal{S}^1$. Section~\ref{sec:3_5} combines the two strategies to provide the overall control law.

\subsection{Parameterization of $\mathcal{SO}(3)\backslash \mathcal{R}_c$}\label{sec:3_1}
\par We can parameterize $\mathcal{S}\mathcal{O}(3)\backslash \mathcal{R}_c$ as follows. Let $v \in \mathcal{S}^2\backslash \{-e_3\}$ and $w \in \mathcal{S}^1$. For $v\neq { e_3}$ define the following two perpendicular vectors in the tangent space to $ \mathcal{S}^2$ at $v$, 
$$ a_1^\perp(v):=-\frac{(I-vv^\top)}{s({ e_3},v)}{ e_3},  \ \ a_2^\perp(v):=\frac{S({e_3})}{s( {e_3}, v)}v,$$
where $I$ denotes the $3 \times 3$ identity matrix.
Note that $a_2^\perp(v)$ is also in the tangent space to $\mathcal{S}^2$ at ${ e_3}  $. 
Let  
$$ b_1^\perp(v):=-\frac{(I- {e_3} {e_3}^\top)}{s({ e_3},v)} v$$
be an extra vector in the tangent space to $\mathcal{S}^2$ at $e_3$, which is perpendicular to $a_2^\perp(v)$. The first component of these tangent vectors at ${ e_3} $ are denoted by
$$ d_1(v)= e_1^\top b_1^\perp(v), \ \ d_2(v)= e_1^\top a_2^\perp(v).$$
Moreover, define the following rotation matrices
$$ 
\begin{aligned}
U(v)&:= \begin{bmatrix}
a_1^\perp(v) & a_2^\perp(v) & v
\end{bmatrix}\\
V(v)&:= \begin{bmatrix}d_1(v) & -d_2(v) & 0 \\ d_2(v) & d_1(v) & 0  \ \\ 0 & 0 & 1\end{bmatrix}
\\
W(w)&:= \begin{bmatrix}w_1 & -w_2 & 0 \\ w_2 & w_1 & 0  \ \\ 0 & 0 & 1\end{bmatrix}\,,
\end{aligned}
$$
for $w=(w_1,w_2)$. Then, the following map is the desired homeomorphism
$\phi: \left( \mathcal{S}^2 \backslash \{-e_3\}\right)\times \mathcal{S}^1 \mapsto \mathcal{S}\mathcal{O}(3)\backslash \mathcal{R}_c 
$ 
$$ \phi(v,w)= \left\{\begin{aligned}
&U(v)V(v)W(w) \text{ if }e_3 \neq v \\
&W(w) \text{ if }e_3=v, \end{aligned}\right.$$
where $w= (w_1,w_2)$. The inverse $
\phi^{-1}: \mathcal{S}\mathcal{O}(3)\backslash \mathcal{R}_c  \mapsto
\left( \mathcal{S}^2 \backslash \{-e_3\}\right)\times \mathcal{S}^1
$ is given by 
$$ \phi^{-1}\!(R) \!=\! \left\{\begin{aligned}
&(r_3, \!\begin{bmatrix}
d_1(r_3) & d_2(r_3) \\ 
-d_2(r_3) & d_1(r_3)
\end{bmatrix}  \begin{bmatrix}
r_1^\top a_1^\perp(r_3) \\ 
r_1^\top a_2^\perp(r_3) 
\end{bmatrix}).\!\text{ if }\!r_3\!\neq\! e_3,\\
& (e_3,\begin{bmatrix}
e_1^\intercal r_1 \\
e_2^\intercal r_1
\end{bmatrix})\text{ if }r_3=e_3.\\
\end{aligned}\right.$$

In fact, it is clear that $\phi$ and $\phi^{-1}$ are one-to-one and continuous; therefore, $\phi$ defines an homeomorphism between $
\left( \mathcal{S}^2 \backslash \{-e_3\}\right)\times \mathcal{S}^1$ and $ \mathcal{S}\mathcal{O}(3)\backslash \mathcal{R}_c $.

\subsection{Input transformation}\label{sec:3_2}
\par In this section, we focus on the control of the attitude described by the equation
\begin{equation}\label{eq:3}
	 \dot{ R}=RS(\omega),
\end{equation}
and consider the standard stabilization problem of finding a control law for $\omega$ that drives $R$ to the identity matrix $\bar{R}=\begin{bmatrix}
\bar{r}_1 & \bar{r}_2 & \bar{r}_3
\end{bmatrix}=I$ for a given initial condition $R(0)=R_0 \in \mathcal{S}\mathcal{O}(3)$. The tracking problem can either be converted into a stabilization problem by defining $\tilde{R} = \bar{R}^\top R$ and noticing that $\dot{\tilde{R}}=\tilde{R}S(\tilde{\omega})$, for $\tilde{\omega}=\omega-R^\top \bar{R}\bar{\omega}$ or tackled as we do in the sequel. 
\par For this problem, we define an input transformation 
$ \tilde{\omega} = g(R,\omega),$
where $\tilde{\omega}=(\tilde{\omega}_1,\tilde{\omega}_2,\tilde{\omega}_3)$, such that, for $(v,w)=\phi^{-1}(R)$,
\begin{align} \label{eq:v1}
\dot{v} &=f_A(v,\tilde{\omega}_1,\tilde{\omega}_2 )\\
\dot{w} &=f_B(w,\tilde{\omega}_3). 
\label{eq:w1}
\end{align}
Note that with this transformation we can control independently $v \in \mathcal{S}^2$ and $w\in \mathcal{S}^1$.
\par To this effect, we start by noticing that if we make
$$\omega_1 = -r_2^\top u_v, \quad \omega_2 = r_1^\top u_v, $$
for some control input $u_v \in \mathbb{R}^3$, and replace these expressions in the  third column of~\eqref{eq:3}, i.e., $\dot{r}_3=\omega_2r_1-\omega_1r_2$, we obtain
\begin{equation*}
		\dot { r}_3 = { r}_1{ r}_1^\top { u}_v + { r}_2{ r}_2^\top { u}_v = (I - { r}_3{ r}_3^\top ){ u_v},
\end{equation*}
or, equivalently, substituting $r_3 = v$, 
\begin{equation}
	\dot v  = (I - vv^\top ){ u_v}.
		\label{eq:r3dot2}
\end{equation}
Note that the component of $u_v$ parallel to $v$ plays no role in \eqref{eq:r3dot2}. Therefore, considering that ${u_v}$ belongs to the tangent space to $v$, that is,
\begin{equation}\label{eq:u_}
	{u_v} = \tilde{\omega}_1 z_1+\tilde{\omega}_2 z_2,
\end{equation}
for some vectors $z_1$ and $z_2$ in the tangent space of $\mathcal{S}^2$ at $v$, 
we conclude that \eqref{eq:r3dot2}~and~\eqref{eq:u_} take the form~\eqref{eq:v1}.
\par Let $\bar{v}=\bar{r}_3$ and define
\begin{equation}\label{eq:7}
	\begin{aligned}
		R_e &:= \begin{bmatrix} \frac{S(\bar v)}{s(\bar v,v)} v & -\frac{(I-{ \bar v \bar v^\top)}}{s(\bar v,v)}v & \bar v \end{bmatrix}\\
		R_r &:= \begin{bmatrix} \frac{S(\bar v)}{s(\bar v,v)}v & \frac{(I-vv^\top)}{s(\bar v,v)}\bar v & v \end{bmatrix} \,.
	\end{aligned}
\end{equation}
The parallel transport from the space orthogonal to $\bar v$ to the space orthogonal to $v$ along the geodesic curve in $\mathcal{S}^2$ connecting the two is given by
$
	R_rR_e^\top z,
$
for a vector $z$ that belongs to the tangent space to $\mathcal{S}^2$ at $ \bar v$.
We can then use the parallel transport for ${ \bar{ r}}_1$ and ${ \bar{ r}}_2$ and obtain a local frame in the tangent space of $v$,
$
	\left\{ R_rR_e^\top { \bar{ r}}_1, R_r R_e^\top { \bar{ r}}_2 \right\},
$
the angle between this frame and $\{ { r}_1,{ r}_2\}$ is the error angle. 
Note that $w = (w_1,w_2)$, where
\begin{equation}\label{eq:18}
\begin{aligned}
w_1 &= r_1^\top R_r R_e^\top { \bar{ r}}_1 \\
w_2 &= r_2^\top R_r R_e^\top { \bar{ r}}_1,
\end{aligned}
\end{equation}
Moreover, $ w_1^2+w_2^2=1$,
and the error angle is $\text{atan2}(w_2,w_1)$. 
We wish to drive this angle to zero, or equivalently, drive $w_1$ to  $1$ and  $w_2$ to $ 0$.
Let $\omega_e$ and $\omega_r$ be obtained in a similar fashion to~\eqref{eq:r2omega} and be such that
$$\dot{R}_e = R_eS(\omega_e), \quad \dot{R}_r= R_rS(\omega_r).$$

\par Using these expressions, and the fact that, from~\eqref{eq:3}, $\dot{r}_1= \omega_3 r_2-\omega_2 r_3$ and $\dot{r}_2= -\omega_3 r_1+\omega_1 r_3$, we can now write
\begin{equation}\label{eq:syspsi}
	\begin{aligned}
		\dot{w}_1 &= \omega_3 w_2- \underbrace{(\omega_2 r_3^\top R_r R_e^\top { \bar{ r}}_1-r_1^\top R_r S(\omega_r-\omega_e)R_e^\top { \bar{ r}}_1)}_{:=\beta_1} \\
		\dot{w}_2 &=  -\omega_3 w_1+ \underbrace{\left(\omega_1 r_3^\top R_r R_e^\top { \bar{ r}}_1+r_2^\top R_r S(\omega_r-\omega_e)R_e^\top { \bar{ r}}_1\right)}_{:=\beta_2}.\\
	\end{aligned}
\end{equation}
Since $w_1^2+w_2^2=1$ we have $w_1\dot{w}_1+w_2\dot{w}_2=0$,
from which we conclude that $ w_2\beta_2-w_1\beta_1=0$,
or, assuming that $w_1 \neq 0$ and $w_2 \neq 0$, $ \frac{\beta_2}{w_1}=\frac{\beta_1}{w_2}$.
Since we might have $w_1 = 0$ or $w_2= 0$ but not simultaneously, the following is bounded
$$ \omega_{r} = \left\{\begin{aligned}
&\frac{\beta_2}{w_1}\text{ if }|w_1|>|w_2|\\
&\frac{\beta_1}{w_2}\text{ otherwise.}
\end{aligned}\right.$$
Then we can write
\begin{equation}\label{eq:1}
\dot{w}_1 = (\underbrace{\omega_3 -\omega_r}_{:=u_w})w_2, \quad
\dot{w}_2 =  -(\underbrace{\omega_3-\omega_r}_{:=u_w}) w_1,
\end{equation}
which, by considering $\tilde{\omega}_3=u_w$, takes the form~\eqref{eq:w1}.

\subsection{Controlling $v \in \mathcal{S}^2$}\label{sec:3_3}
\par Let $\bar{ v}={ \bar{r}}_3$ be the reference for $v$. 
We propose the following control law to control~\eqref{eq:r3dot2},
 \begin{equation}\label{eq:input}
 { u_v} = { u}_{v,\text{FB}} + { u}_{v,\text{FF}} \in \mathbb{R}^3,
 \end{equation}
 where ${ u}_{v,\text{FB}}$ and  ${ u}_{v,\text{FF}}$ can be seen as feedback and feedforward laws as detailed next. 
 \par For feedback, we consider
\begin{equation}
{ u}_{v,\text{FB}} = \begin{cases}
k_1\bar{v} & \text{if $v^\top \bar{ v} \ge 0$} \\
k_1 { r}_1 & \text{if $v = -\bar{ v}$}\\
k_1\frac{1}{s(v,\bar{v} )}\bar{v} & \text{otherwise},
\end{cases}
\label{eq:input_u2}
\end{equation}
with $k_1 > 0$. Intuitively, this control law is such that when used in~\eqref{eq:r3dot2}, $(I-vv^\intercal){ u}_{\text{FB}}$ becomes a vector of the tangent space at $v$ proportional to the derivative at $v$ of the geodesic curve from $v$ to $\bar{v}$ (the curve connecting $v$ and $\bar{v}$ in $\mathcal{S}^2$ with shortest distance). The norm of this tangent vector is unitary in $\{v \in \mathcal{S}^2|v^\top \bar{v}\leq 0\}$ and decreases as $v$ approaches $\bar{v}$ in $\{v \in \mathcal{S}^2|v^\top \bar{v}>0\}$. Note that when $v = -\bar{v}$ the geodesic is not unique. Therefore the control law corresponds to a rotation in the direction of ${r}_1$, while in fact, any unitary vector in the plane $({r}_1,{r}_2)$ can be used.
\par For the feedforward term, we consider
\begin{equation}\label{eq:ff}
{ u}_{v,\text{FF}} = \begin{cases}
\dot{\bar{v}} & \text{if $v = \bar{v}$}\\
-\dot{\bar{v}} & \text{if $v = -\bar{v}$}\\
\Theta \dot{\bar{v}}& \text{otherwise,}
\end{cases}
\end{equation}
with $\Theta =  \frac{1}{s(v, \bar{ v})^2}((v \times \bar{ v})(v \times \bar{ v})^\top -  (I - vv^\top )\bar{v}v^\top)$, 
which maps $\dot{\bar{ v}}$ to the tangent space of $\mathcal{S}^2$ at $v$ (see Fig.~\ref{fig:r3_ff}). The intuition behind this feedforward control law is that the angle between $v$ and $\bar{v}$ remains constant when only the feedforward input is applied.


Now we will look at the stability of this control law, which is summarized in the next result.


\begin{theorem}
Let $\bar{ v}(t)$ for $t \in \mathbb{R}_{\geq 0}$ define a given differentiable curve in $\mathcal{S}^2$  and consider \eqref{eq:r3dot2} with control law~\eqref{eq:input},~\eqref{eq:input_u2} and~\eqref{eq:ff}. Then $\lim_{t \rightarrow \infty}v = \bar{v}$ for every initial condition $v(0)\in\mathcal{S}^2$.
\label{theorem:Stab_v}
\end{theorem}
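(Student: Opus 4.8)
The plan is to certify convergence with the angular-error Lyapunov function $V = 1 - v^\top \bar v \in [0,2]$, which vanishes exactly at the target $v=\bar v$ and attains its maximum at the antipode $v=-\bar v$. First I would record the invariance $v^\top \dot v = v^\top(I-vv^\top)u_v = 0$, so that the closed loop stays on $\mathcal{S}^2$; since the feedback in~\eqref{eq:input_u2} is continuous away from $v=\pm\bar v$ (note that the two branches agree at $v^\top\bar v=0$, where $s(v,\bar v)=1$), solutions exist and are unique on $\mathcal{S}^2\setminus\{\pm\bar v\}$, with the antipode the only genuinely delicate point. Differentiating gives $\dot V = -\dot v^\top\bar v - v^\top\dot{\bar v}$, and I would substitute $\dot v=(I-vv^\top)u_v$ with $u_v = u_{v,\text{FB}}+u_{v,\text{FF}}$, treating the feedback and feedforward contributions separately.

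The decisive step is to show that the feedforward term contributes nothing to $\dot V$. In the generic case $u_{v,\text{FF}}=\Theta\dot{\bar v}$, and writing $n:=v\times\bar v$ and $p:=(I-vv^\top)\bar v=\bar v-(v^\top\bar v)v$ (a tangent vector at $v$ with $n^\top p=0$ and $p^\top p=s(v,\bar v)^2$), I expect the parallel-transport structure of $\Theta$ to yield $\Theta^\top p=-v$, whence $\dot v_{\text{FF}}^\top\bar v=\dot{\bar v}^\top\Theta^\top p=-v^\top\dot{\bar v}$, so the feedforward contribution $-(\dot v_{\text{FF}}^\top\bar v+v^\top\dot{\bar v})$ cancels exactly. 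This is precisely the design intent noted after~\eqref{eq:ff}: the feedforward keeps the angle to $\bar v$ constant. What remains is the feedback contribution, which I would evaluate in the two branches using $p^\top\bar v=1-(v^\top\bar v)^2=s(v,\bar v)^2$ to obtain $\dot V=-k_1\, s(v,\bar v)^2$ when $v^\top\bar v\ge 0$ and $\dot V=-k_1\, s(v,\bar v)$ otherwise. Since $s(v,\bar v)=\sqrt{V(2-V)}$, the closed loop reduces to the autonomous scalar comparison equation $\dot V=-k_1 V(2-V)$ for $V\in[0,1]$ and $\dot V=-k_1\sqrt{V(2-V)}$ for $V\in(1,2)$.

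From this reduction, convergence for every $v(0)\neq-\bar v$ is immediate. The quantity $V$ is strictly decreasing while $V\in(0,2)$; on any interval $[1,V_0]$ with $V_0<2$ the rate is bounded away from zero, so $V$ falls below $1$ in finite time, and thereafter $\dot V=-k_1V(2-V)\le -k_1 V$ forces exponential decay $V\to 0$, i.e.\ $v\to\bar v$. I would also verify that $v=\bar v$ is invariant, since there $\dot v=(I-\bar v\bar v^\top)(k_1\bar v+\dot{\bar v})=(I-\bar v\bar v^\top)\dot{\bar v}=\dot{\bar v}$, using $\bar v^\top\dot{\bar v}=0$.

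The main obstacle is the single excluded initial point $v(0)=-\bar v$, where $s(v,\bar v)=0$, the reduction above breaks down, and in fact $\dot V=0$, so monotonicity alone cannot force escape. Here I would argue directly from the special cases of~\eqref{eq:input_u2} and~\eqref{eq:ff}: at $v=-\bar v$ one has $\dot v=k_1 r_1-\dot{\bar v}$, whose tangential part $k_1 r_1$ is a nonzero push of magnitude $k_1$ off the antipode, while for $v=-\bar v+\delta$ with $\delta$ a small tangent perturbation the bounded feedback velocity $\tfrac{k_1}{s(v,\bar v)}p$ reduces to $k_1\,\delta/|\delta|$, pointing away from $-\bar v$; hence the antipode is a repeller. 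Consequently $V(t)<2$ for all $t>0$, after which the previous paragraph yields $v\to\bar v$. Making this repulsion rigorous, in particular handling the discontinuity of the feedback at $v=-\bar v$ and the input that blows up yet produces a bounded outward velocity, is the delicate part of the argument and the place I would spend the most care.
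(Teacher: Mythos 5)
Your proof is correct and follows essentially the same route as the paper: both track the scalar $\xi=v^\top\bar v$ (your $V=1-\xi$), both show that the feedforward term $\Theta\dot{\bar v}$ cancels exactly in $\dot\xi$, and both reduce the closed loop to the same one-dimensional equation $\dot\xi=k_1(1-\xi^2)$ for $\xi\ge 0$ and $\dot\xi=k_1\sqrt{1-\xi^2}$ for $\xi<0$ --- the paper integrates it in closed form where you argue qualitatively via monotonicity and a finite-time/exponential-decay split, which is equivalent. The one point where you go beyond the paper is the antipodal initial condition $v(0)=-\bar v(0)$: the paper's proof quietly assumes $v(0)\neq-\bar v(0)$ even though the theorem claims every initial condition, and your observation that $\dot v=k_1 r_1-\dot{\bar v}$ gives a nonzero tangential push off the antipode (so that $V(t)<2$ for $t>0$) is the right repair, though, as you acknowledge, it still needs to be made rigorous given the discontinuity of the feedback there.
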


\begin{proof} Proving that $v$ converges to $\bar{v}$ as $t$ converges to infinity is equivalent to proving that $\xi:=\bar{v}^\intercal v$ converges to $1$ since
$\frac{1}{2}(\bar{v}- v)^\intercal (\bar{v}- v) = 1-\bar{v}^\top v$. We shall prove that for every $v(0)\neq \bar{v}(0)$,
\begin{equation}\label{eq:F3}
\dot{\xi} = \left\{\begin{aligned} &k_1(1-\xi^2), \text{ if }0\leq \xi \leq 1, \\
 &k_1\sqrt{(1-\xi^2)}\text{ if }-1< \xi<0.
\end{aligned}\right.
\end{equation}
Then, we can compute a closed-form expression for $\xi$: if $0\leq \xi(0) \leq 1$,
$$ \xi(t)=\frac{\kappa e^{2k_1t}-1}{\kappa e^{2k_1t}+1}, \text{ for every }t\geq 0,$$
where $\kappa:=\frac{1+\xi(0)}{1-\xi(0)}$ and if $-1<\xi(0) < 0$, letting $c_1 := \text{asin}(\xi(0))$ and $t_s:=-\frac{c_1}{k_1}$
$$ \xi(t) = \left\{\begin{aligned}
&\frac{e^{2k_1t}-1}{e^{2k_1t}+1}, \text{ if }t\geq t_s\\
&\sin(k_1t+c_1),\text{if } 0\leq t < t_s
\end{aligned}\right.,$$
By taking the limit as $t$ converges to infinity, we conclude that $\xi$ converges to 1.
\par It rests to prove~\eqref{eq:F3}. The time derivative of $\xi$ is
\begin{equation*}
\begin{aligned}
\dot \xi&= (\dot{\bar{v}}^\top v + \bar{v}^\top \dot{v})= (\dot{\bar{v}}^\top v + \bar{v}^\top (I-{v}v^\top)  ({ u}_{v,\text{FB}} + { u}_{v,\text{FF}})).\\
\end{aligned}
\end{equation*}
Note that  $\bar{ v}^\top (I - vv^\top){u}_{v,\text{FF}} = -\dot{\bar{v}}^\top v,$ 
and therefore
$\dot \xi = \bar{v}^\top (I - vv^\top){ u}_{v,\text{FB}},$
which can be rewritten as
\begin{equation}\label{eq:FF}
\dot \xi = \begin{cases}
\bar{v}^\top (I-v v^\top)k_1 \bar{v}& \text{if $v^\top \bar{v} \ge 0$}\\
\bar{v}^\top (I-v v^\top)k_1 { r}_{1} = 0 & \text{if $v = - \bar{v}$}\\
\bar{v}^\top (I-v v^\top)\frac{k_1 \bar{v}}{s(v, \bar{ v})} & \text{otherwise}.
\end{cases}
\end{equation}
Since $v(0) \neq - \bar{v}(0)$, then $\xi(0)\neq -1$, and since $\xi(t)$ monotonically increases, $v(t)\neq -\bar{v}(t)$, for every $t \geq 0$. Then~\eqref{eq:FF} implies~\eqref{eq:F3}, concluding the proof.

\end{proof}

\subsection{Controlling $w \in S^1$}\label{sec:3_4}
\par In this section, we consider an arbitrary function of time $\bar{R}(t)$ (not necessarily the identity). It is convenient therefore to define $(.,w)=\phi^{-1}(\tilde{R})$, where $\tilde{R} = \bar{R}^\top R$, which is still described by~\eqref{eq:18} but now for time-varying $\bar{R}$. Following the same steps as in Section~\ref{sec:3_2}, we arrive at~\eqref{eq:syspsi} with $\beta_1$ and $\beta_2$ replaced by $ \bar{\beta}_i = \beta_i+\theta_i, \quad i \in \{1,2\},$ where 
$\theta_1:=-r_1^\top R_r R_e^\top { \dot{\bar{ r}}}_1$, $\theta_2:=r_2^\top R_r R_e^\top { \dot{\bar{ r}}}_1$. As a result, instead of~\eqref{eq:1}, we obtain
\begin{equation*}
\dot{w}_1 = (u_w-\zeta)w_2, \quad
\dot{w}_2 =  -(u_w-\zeta) w_1,
\end{equation*}
where
\begin{equation*}
\zeta := \left\{\begin{aligned}
& \frac{\theta_2}{w_1}\text{  if }|w_1|> |w_2| \\
& \frac{\theta_1}{w_2}\text{  otherwise. }
\end{aligned}\right.
\end{equation*}
\par We propose the following control law
\begin{equation}\label{eq:5}
	{ u}_w = { u}_{w,\text{FB}} + { u}_{w,\text{FF}} \in \mathbb{R},
\end{equation}
where ${ u}_{w,\text{FB}}$ and  ${ u}_{w,\text{FF}}$ can be seen as feedback and feedforward laws. For feedforward, we consider $u_{w,\text{FF}}= \zeta $ and for feedback we consider
 \begin{equation}\label{eq:2}
  u_{w,FB}= \left\{\begin{aligned}
 & k_2w_2 \text{ if }w_1 \geq 0 \\
 & -k_2 \text{ if }w_1 <0\text{ and }w_2<0\\
 &k_2\text{ if }w_1 <0\text{ and }w_2\geq 0,
 \end{aligned}\right.
 \end{equation}
 for some positive constant $k_2$. 
 \par The next result establishes that this control  drives $w_1$ to 1 and $w_2$ to $  0$, as desired.

\begin{theorem}
Let $\bar{R}(t)$ for $t \in \mathbb{R}_{\geq 0}$ define a given differentiable curve in $\mathcal{S}\mathcal{O}(3)$. Consider the system \eqref{eq:1} with control law~\eqref{eq:5},~\eqref{eq:2}. Then
\begin{equation}
\lim_{t \rightarrow \infty}w_1= 1\text{ and }\lim_{t \rightarrow \infty}w_2=0,
\end{equation}
for every initial condition $w(0)\in\mathcal{S}^1$.
\label{theorem:Stab_w}
\end{theorem}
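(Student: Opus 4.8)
The plan is to mirror the argument used for Theorem~\ref{theorem:Stab_v}, exploiting the fact that the feedforward term cancels the reference-induced drift. First I would substitute the proposed control law into the closed-loop equations of Section~\ref{sec:3_4}. Since $u_{w,\text{FF}}=\zeta$, the quantity $u_w-\zeta$ appearing in the $w$-dynamics reduces to $u_{w,\text{FB}}$, so the closed loop becomes
\begin{equation*}
\dot{w}_1 = u_{w,\text{FB}}\,w_2,\qquad \dot{w}_2 = -u_{w,\text{FB}}\,w_1,
\end{equation*}
which is exactly~\eqref{eq:1} with $u_w$ replaced by $u_{w,\text{FB}}$ and, crucially, is independent of the reference $\bar R(t)$. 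I would then record that $w$ remains on $\mathcal{S}^1$, since $\tfrac{d}{dt}(w_1^2+w_2^2)=2w_1\dot w_1+2w_2\dot w_2=0$.

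Next, following the structure of the proof of Theorem~\ref{theorem:Stab_v}, I would derive a scalar autonomous ODE for $w_1$ by substituting~\eqref{eq:2} and using $w_2^2=1-w_1^2$. For $w_1\ge 0$ we have $u_{w,\text{FB}}=k_2 w_2$, giving $\dot w_1=k_2 w_2^2=k_2(1-w_1^2)$. For $w_1<0$ I would check both signs of $w_2$: if $w_2\ge 0$ then $u_{w,\text{FB}}=k_2$ and $\dot w_1=k_2 w_2=k_2\sqrt{1-w_1^2}$, while if $w_2<0$ then $u_{w,\text{FB}}=-k_2$ and $\dot w_1=-k_2 w_2=k_2\sqrt{1-w_1^2}$; both collapse to the same expression. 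Hence $w_1$ obeys precisely the ODE~\eqref{eq:F3} with $\xi$ replaced by $w_1$ and $k_1$ by $k_2$. The closed-form solutions computed there then show $w_1(t)\to 1$ whenever $w_1(0)>-1$, and since $w_1^2+w_2^2=1$ this forces $w_2\to 0$, as claimed.

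The one case not covered by this reduction is the antipodal initial condition $w(0)=(-1,0)$, where the global (as opposed to almost global) nature of the claim must be established, and which I expect to be the main obstacle. Here the scalar ODE degenerates, $\dot w_1=k_2\sqrt{1-w_1^2}=0$, so $w_1$ alone cannot be used; instead I would appeal to the full dynamics, noting that $w_1<0$ and $w_2=0\ge0$ selects $u_{w,\text{FB}}=k_2$, whence $\dot w_2=-k_2 w_1=k_2>0$. Thus the trajectory immediately leaves the antipode into the region $\{w_1<0,\ w_2>0\}$, where $\dot w_1=k_2 w_2>0$ and $w_1$ strictly exceeds $-1$ for all $t>0$; the previous paragraph then applies and yields $w_1\to1$, $w_2\to0$.

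Finally I would address well-posedness, since $u_{w,\text{FB}}$ is discontinuous. Restricting attention to $\mathcal{S}^1$, the locus $\{w_1<0,\ w_2=0\}$ meets the circle only at $(-1,0)$, while on the remaining switching set $\{w_1=0\}$ (the points $(0,\pm1)$) a direct computation shows the two one-sided limits of the closed-loop vector field coincide, so the field is continuous there. At the antipode the field is single-valued but points away from $(-1,0)$ along both adjacent arcs, so no sliding occurs and the Carath\'eodory solution is the escaping trajectory described above. Collecting these cases establishes convergence from every $w(0)\in\mathcal{S}^1$.
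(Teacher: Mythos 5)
Your proof is correct, but it takes a genuinely different route from the paper's. Both start the same way: the feedforward cancels $\zeta$, leaving $\dot w_1 = u_{w,\text{FB}}\, w_2$, $\dot w_2 = -u_{w,\text{FB}}\, w_1$. From there the paper uses the Lyapunov function $V(w_1,w_2)=(w_1-1)^2+w_2^2=2(1-w_1)$, shows $\dot V=-\dot w_1$ is negative semidefinite case by case, and invokes LaSalle's invariance principle: on the set where $\dot V=0$ (i.e., $w_2=0$) one has $\dot w_2=k_2$ at $(-1,0)$ and $\dot w_2=0$ at $(1,0)$, so $(1,0)$ is the only invariant point and is globally asymptotically stable. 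You instead reduce the dynamics to the scalar ODE $\dot w_1=k_2(1-w_1^2)$ for $w_1\ge 0$ and $\dot w_1=k_2\sqrt{1-w_1^2}$ for $w_1<0$, which is exactly~\eqref{eq:F3} with $\xi$ replaced by $w_1$ and $k_1$ by $k_2$, and then reuse the closed-form solutions from the proof of Theorem~\ref{theorem:Stab_v}; your case computations are all correct. What your route buys is explicit trajectories and convergence rates, and a more careful treatment of the two points the paper glosses over: the antipodal initial condition $(-1,0)$, where the scalar reduction degenerates but the full field gives $\dot w_2=k_2>0$ and immediate escape, and the well-posedness of the discontinuous feedback, where you verify that the field is in fact continuous at $(0,\pm 1)$ and that the only genuine discontinuity on $\mathcal{S}^1$ is the antipode, from which no sliding occurs. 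The paper's LaSalle argument is shorter and self-contained, but its one-line invocation of LaSalle implicitly assumes continuity of the closed-loop vector field, which fails at $(-1,0)$; your version closes that gap at the cost of leaning on the integration already carried out in Theorem~\ref{theorem:Stab_v}'s proof.
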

\begin{proof}
	Replacing the control law $u_w$, we obtain
\begin{equation*}
		\dot{w}_1 = u_{w,\text{FB}}w_2, \quad
		\dot{w}_2 =  -u_{w,\text{FB}} w_1.
\end{equation*}
 Consider the Lyapunov function
\begin{equation*}
V(w_1,w_2) = (w_1 - 1)^2 + w_2^2 = 2(1-w_1),
\end{equation*}
which is always positive definite. The derivative is given by
$\dot V(w_1, w_2) = -\dot w_1$
and can be rewritten as
\begin{equation}
\dot V(w_1,w_2) = \begin{cases} 
-k_2w_2^2 & \text{if $w_1 \ge 0$} \\
k_2w_2 & \text{if $w_1 < 0$ and $w_2 < 0$}\\
-k_2w_2 & \text{if $w_1 < 0$ and $w_2 \ge 0$}.
\end{cases}
\end{equation}
Therefore, the Lyapunov function derivative is negative semi-definite, and in the set $\Omega:=\{(w_1,w_2)\in \mathbb{R}^2|V(w_1,w_2)=0\}$, i.e., when $w_2=0$, we have $\dot{w}_2=k_2$ if $w_1=-1$, $\dot{w}_2=0$ if $w_1=1$. Therefore, the only solution that can stay in $\Omega$ is $w_1 = 1$, $w_2=0$ and thus according to LaSalle's Theorem \cite{Khalil_book} we have that $(1,0)$ is globally asymptotically stable.
\end{proof}

\subsection{Inner-loop control law}\label{sec:3_5}
\par The combined control law in $\mathcal{S}^2 \times \mathcal{S}^1$ allows for~\eqref{eq:3} to track a given reference $\bar{R}$ as stated in the next result, which results from the arguments given in the previous sections.
\begin{theorem}
	Let $\bar{R}(t)$ for $t \in \mathbb{R}_{\geq 0}$ define a given differentiable curve in $\mathcal{S}\mathcal{O}(3)$  and consider \eqref{eq:3} with control law
	\begin{equation}
	\omega = \begin{bmatrix} -r_2^\top u_v & r_1^\top u_v &u_w+\omega_r \end{bmatrix}^\top,
	\end{equation}
	where $u_v$ are $u_w$ are obtained by replacing $v$ by $r_3$ and $\bar{v}$ by $\bar{r}_3$ in~\eqref{eq:input},~\eqref{eq:input_u2} and~\eqref{eq:ff}, and in~\eqref{eq:7},~\eqref{eq:18},~\eqref{eq:syspsi} respectively. Moreover, let, $\tilde{R} := \bar{R}^\top R$. Then
	\begin{equation}
	\lim_{t \rightarrow \infty} \tilde{R} = I
	\end{equation}
	for every initial condition $R(0)$ such that $\tilde{ R}\in \mathcal{S}\mathcal{O}(3)\backslash \mathcal{R}_c$.
\end{theorem}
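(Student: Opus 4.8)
The plan is to exploit the decoupling established in Sections~\ref{sec:3_2}--\ref{sec:3_4}, which lets the closed loop split into two independent subsystems -- one governing $v=r_3$ on $\mathcal{S}^2$ and one governing $w$ on $\mathcal{S}^1$ -- apply Theorem~\ref{theorem:Stab_v} and Theorem~\ref{theorem:Stab_w} to each, and then recombine the two limits through the homeomorphism $\phi$ of Section~\ref{sec:3_1}. The single point of care is that the map $\tilde R\mapsto w$ and the feedforward signals are built from quantities that become singular exactly where the thrust vector reaches its target.

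First I would show that the $v$-subsystem is autonomous and converges. The first two entries of the control law, $\omega_1=-r_2^\top u_v$ and $\omega_2=r_1^\top u_v$, reproduce the third-column dynamics~\eqref{eq:r3dot2}, $\dot v=(I-vv^\top)u_v$ with $v=r_3$, which does not involve $\omega_3$ or $w$; hence this subsystem is exactly the one treated in Theorem~\ref{theorem:Stab_v}. The hypothesis $\tilde R(0)\in\mathcal{SO}(3)\backslash\mathcal{R}_c$ is equivalent to the third column of $\tilde R(0)$, namely $\bar R^\top r_3(0)$, differing from $-e_3$, i.e.\ $r_3(0)\neq-\bar r_3(0)$. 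This is precisely the initial condition required by the proof of Theorem~\ref{theorem:Stab_v}, and by monotonicity of $\xi=\bar r_3^\top r_3$ it also guarantees $r_3(t)\neq-\bar r_3(t)$ for all $t$. I would therefore conclude $r_3\to\bar r_3$, equivalently that the third column $\bar R^\top r_3$ of $\tilde R$ tends to $e_3$.

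Next I would treat the third entry $\omega_3=u_w+\omega_r$. With the feedforward choice $u_{w,\text{FF}}=\zeta$ the terms $\omega_r$ and $\zeta$ cancel in the $w$-dynamics, leaving $\dot w_1=u_{w,\text{FB}}w_2$, $\dot w_2=-u_{w,\text{FB}}w_1$, which is autonomous in $w$ and coincides with the system of Theorem~\ref{theorem:Stab_w}; that theorem then gives $w\to(1,0)$. Finally, since $\tilde R=\phi(\bar R^\top r_3,\,w)$ and $\phi$ is continuous on $(\mathcal{S}^2\backslash\{-e_3\})\times\mathcal{S}^1$, letting $(\bar R^\top r_3,w)\to(e_3,(1,0))$ yields $\tilde R\to\phi(e_3,(1,0))=W((1,0))=I$, which is the assertion.

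I expect the main obstacle to lie in justifying that the two convergence statements may be invoked simultaneously and that the $w$-subsystem is well posed along the whole trajectory. The matrices $R_e,R_r$ in~\eqref{eq:7}, and therefore $\omega_r$ and $\zeta$, carry the factor $1/s(\bar r_3,r_3)$, which is singular precisely at $r_3=\pm\bar r_3$ -- including the desired limit $r_3=\bar r_3$. I would need to verify that these singularities are removable, so that the clean $w$-equation used in Theorem~\ref{theorem:Stab_w} is valid for all $t$ and that $w$ and $\dot w$ extend continuously as $r_3\to\bar r_3$ (the $r_3=e_3$ branch of $\phi^{-1}$ exhibits exactly this continuous extension at the level of $\tilde R$). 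The remaining point is structural rather than computational: because the $v$-subsystem is autonomous it converges irrespective of $w$, and once its trajectory is fixed the $w$-subsystem is a well-posed autonomous problem for the error angle $\mathrm{atan2}(w_2,w_1)$, so there is no feedback from $w$ back into $v$ and the two results compose without circularity.
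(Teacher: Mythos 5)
Your argument is correct and follows essentially the same route as the paper, which states this theorem without a written proof, asserting only that it ``results from the arguments given in the previous sections'' --- i.e., precisely the combination you spell out: Theorem~\ref{theorem:Stab_v} for the autonomous $r_3$-subsystem (with $\tilde R(0)\notin\mathcal{R}_c$ giving $r_3(0)\neq-\bar r_3(0)$), Theorem~\ref{theorem:Stab_w} for the decoupled $w$-subsystem after the feedforward cancellation of $\omega_r$ and $\zeta$, and continuity of $\phi$ to conclude $\tilde R\to I$. Your added observation that the $1/s(\bar r_3,r_3)$ factors in $R_e,R_r$ must be shown to yield a removable singularity as $r_3\to\bar r_3$ is a legitimate point of care that the paper itself glosses over.
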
 
\par Since asymptotic convergence of $R$ to $\bar R$ is obtained for every initial condition except in a set of measure zero, this result assures almost global convergence, following the standard terminology in the literature \cite{monzon2003}.

\section{Outer loop and overall control law}
\label{sec:outer_loop}
\par Following the control approach of Section~\ref{sec:2_3}, in this section we consider the position tracking problem of finding virtual acceleration input references ${ a}_{ref}$ for~\eqref{eq:f} that can track a certain position and velocity trajectory. The inner loop must make the error~\eqref{eq:SO3_error} zero. Note that when~\eqref{eq:SO3_error} is zero, then ${ a}_{ref}\in 	\mathcal{A}$ where
	$$ 
	\mathcal{A}:=\left\{ \left[ \ 0 \ \ 0 \ \ g \ \right]^\top -r_3 \frac{T}{m} \middle| 0<T<T_M, \|r_3\|= 1\right\}$$
	is a ball in $\mathbb{R}^3$ and $T_M$ is the maximum total thrust that can be applied. Since, for a vector $z\in \mathbb{R}^3$, $ \|z\|_{2} \leq \sqrt{3}\|z\|_{\infty}$, $\mathcal{A}_\infty \subseteq \mathcal{A}$ where
	$$
	\mathcal{A}_\infty\!:=\!\left\{ \left[ \ 0 \ \ 0 \ \ g \ \right]^\top \!\!-r_3 \frac{T}{m} \middle| 0<T<T_M, \|r_3\|_\infty= \frac{1}{\sqrt{3}}\right\}$$
	is a box in $\mathbb{R}^3$. It is more convenient to restrict $a_{ref}$ to belong to $\mathcal{A}_\infty$ because this imposes an independent restriction on each of the three components of  $a_{ref}$. We focus on the following single-input single-output (SISO) system
	 \begin{equation}
	 \dot x_1 =x_2,\quad
	 \dot x_2 = u,
	 \label{eq:double_integrator_model}
	 \end{equation}
	 where $x_1$ and $x_2$ represent the position and velocity, respectively, and the control law for $u$ must satisfy $u \in [\underline{L},\overline{L}]$ for given constants $\underline{L}$ and $\overline{L}$. We let $p=\begin{bmatrix} p_x & p_y & p_z\end{bmatrix}^\top$, where $p_x,p_y,p_z$ are the positions in the direction of $e_1, e_2,$ and $e_3$, respectively. Then, for the system pertaining to $p_x$ and $p_y$, $\underline{L}=-\frac{\bar{ T}}{m\sqrt{3}}$, $\overline{L}=\frac{\bar{ T}}{m\sqrt{3}}$ and for the system pertaining to $p_z$, $\underline{L}=g-\frac{\bar{ T}}{m\sqrt{3}}$, $\overline{L}=g+\frac{\bar{ T}}{m\sqrt{3}}$. If we further restrict $\overline{L}\leq g$ for the system pertaining to $p_z$, we assure that ~\eqref{eq:r3restriction} holds, which is convenient as explained at the end of Section~\ref{sec:2_3}, since then the angle about $\bar{r}_3$ can be specified by a predefined function $\bar{\psi}$, playing the role of a yaw angle reference. In fact, setting~\eqref{eq:SO3_error} to zero, multiplying by $e_3^\intercal$, noticing that $T$ must be non-negative and using~\eqref{eq:r3restriction}, we conclude that 
	$ e_3^\top a_{ref} \leq g,$ which simply states that the required acceleration downwards cannot be larger than the gravitational acceleration when~\eqref{eq:r3restriction}  is met.
	\par In addition to the saturations on the control input $a_{\text{ref}}$ for~\eqref{eq:f}, we must assume that it is differentiable, if the considered quadrotor model is~\eqref{eq:quadrotor_model} or twice differentiable if \eqref{eq:quadrotor_model},~\eqref{eq:quadrotor_model_d} are considered. Although we consider simply~\eqref{eq:quadrotor_model}, we provide a twice-differentiable control input to cope also with cases where \eqref{eq:quadrotor_model},~\eqref{eq:quadrotor_model_d} are considered.
	\par In Section~\ref{sec:outerloop}, we tackle the problem of finding a saturated twice-differentiable reference tracking control law for~\eqref{eq:double_integrator_model} and, in Section~\ref{sec:overall}, we provide the proposed overall control law.

\subsection{Outer loop}\label{sec:outerloop}
\par We are interested in tracking a reference trajectory $(\bar x_1, \bar x_2)$ and write the error system with states $e_1 = \bar x_1 - x_1$ and $e_2 = \bar x_2-x_2$ as
\begin{equation}
\dot{e}_1 = e_2, \quad \dot{e}_2 = \bar u - u.
\end{equation}

Note that since we have constraints on the input $u$, this also imposes constraints on the reference, namely,
\begin{equation}
\bar u = \dot {\bar x}_2 \in (\underline{L},\overline{L}).
\end{equation}
Consider the control law
\begin{equation}
u = \text{sat}^\epsilon_{(\underline{L},\overline{L})}(\bar u + k_1 e_1 + k_2 e_2),
\label{eq:double_integrator_control_law}
\end{equation}
where $k_1, k_2 \in \mathbb{R}_{>0}$ and
\begin{equation}
\text{sat}^\epsilon_{(a,b)}(x) := \begin{cases}
x & \text{if $x \in (a+\epsilon,b-\epsilon)$}\\
a & \text{if $x \in (\infty, a-\epsilon]$}\\
b & \text{if $x \in [b+\epsilon, \infty)$}\\
x+\frac{1}{4\epsilon}(x-(a+\epsilon))^2 & \text{if $x \in (a-\epsilon,a+\epsilon]$}\\
x-\frac{1}{4\epsilon}(x-(b-\epsilon))^2 & \text{if $x\in [b-\epsilon, b+\epsilon)$,}
\end{cases}
\end{equation}
with $\epsilon \in [0, \frac{b-a}{2}]$. Since $\underline{L} < \bar u < \overline{L}$, we can rewrite \eqref{eq:double_integrator_control_law} to
\begin{equation}
u =\bar u + \text{sat}^\epsilon_{(\underline{L}-\bar u,\overline{L}-\bar u)}(k_1 e_1 + k_2 e_2).
\end{equation}
The error dynamics for the closed-loop system become
\begin{equation}
\begin{aligned}
\dot e_1 &= e_2\\
\dot e_2 &= -\text{sat}^\epsilon_{(a,b)}(k_1 e_1 + k_2 e_2),
\end{aligned}
\end{equation}
where $a = \underline{L}-\bar u \in (\underline{L}-\overline{L},0)$ and $b = \overline{L} - \bar u \in (0,\overline{L}-\underline{L})$. 

\subsection{Overall control law}\label{sec:overall}
\par The control law for the outer-loop~\eqref{eq:double_integrator_control_law} provides a virtual acceleration reference vector, denoted by ${ a}_{ref}=\begin{bmatrix} u_x & u_y & u_z\end{bmatrix}^\top$,where $u_x$, $u_y$, $u_z$ are given by \eqref{eq:double_integrator_control_law} for the corresponding error variables corresponding to $p_x$, $p_y$ and $p_z$. In this section, we use $a_{ref}$ to find a control input $\omega$ and $T$ to reduce the error~\eqref{eq:SO3_error} considering the dynamics given in~\eqref{eq:quadrotor_model}.
We define $\bar { a}_{ref} := \left[ {\begin{array}{*{20}{c}} 0& 0& g \end{array}} \right]^\top - { a}_{ref}$. The error \eqref{eq:SO3_error} can be rewritten as 
\begin{equation}
\tilde a = \bar { a}_{ref} - { r}_3 m T = \bar{r}_3 \norm{\bar { a}_{ref}} -{ r}_3 m T,
\end{equation}
where $\bar{r}_3 = \frac{\bar { a}_{ref}}{\norm{\bar { a}_{ref}}}$ represents the direction, and $\norm{\bar { a}_{ref}}$ the magnitude of $\bar { a}_{ref}$. 
Note that $\norm{\bar {a}_{ref}} > 0$. A natural choice for the thrust is $ T = m\norm{\bar { a}_{ref}}$. Moreover, $\bar{r}_3$ provides the reference for the inner-loop. Note that $\bar{r}_3$ must be now computed online and depends on the position and velocity errors in~\eqref{eq:double_integrator_control_law}. By picking $\underline{L},\overline{L}$, the saturation constants in Section~\ref{sec:outerloop}, sufficiently small we can guarantee that~\eqref{eq:r3restriction} holds. Therefore, $\bar{r}_1$ and $\bar{r}_2$ are completely characterized by a desired yaw angle $\bar \psi$ as explained in Section~\ref{sec:2_2}.
\par The overall control law is then
\begin{equation}
\begin{aligned}
\bar { a}_{ref} &= \left[ {\begin{array}{*{20}{c}} 0& 0& g \end{array}} \right]^\top -\left[ {\begin{array}{*{20}{c}} u_x& u_y& u_z \end{array}} \right]^\top \\
T &= m\norm{\bar { a}_{ref}}, \quad  \bar{r}_3 = \frac{\bar { a}_{ref}}{\norm{\bar { a}_{ref}}}\\
\bar{r}_1 &= \frac{S(\bar r^\perp_{\psi}) \bar{r}_3}{s(\bar r^\perp_{ \psi}, \bar{r}_3)}, \quad \bar{r}_2 = { S(\bar{r}}_3) { \bar{r}}_1, \quad \\
\omega &= \begin{bmatrix} -r_2^\top u_v & r_1^\top u_v &u_w+\omega_r \end{bmatrix}^\top,
\end{aligned}
\end{equation}
where $u_x$, $u_y$, $u_z$ are given by \eqref{eq:double_integrator_control_law} for the corresponding error variables corresponding to $p_x$, $p_y$ and $p_z$, $u_v$ and $u_w$ are obtained by replacing $v$ by $r_3$ and $\bar{v}$ by $\bar{r}_3$ in~\eqref{eq:input},~\eqref{eq:input_u2} and~\eqref{eq:ff} and in~\eqref{eq:7},~\eqref{eq:18},~\eqref{eq:5},~\eqref{eq:2} respectively.

%

\section{Simulations}
\label{sec:simulations}
In this section, we present simulation results where we use the proposed saturated PD control law in the outer loop, and in the inner loop, we compare the proposed decoupled controller in $\mathcal{S}^2 \times \mathcal{S}^1$ with the discontinuous quaternion-based control proposed in~\cite{Mayhew2011,Fragopoulos2004} and the continuous controller in $\mathcal{S}\mathcal{O}(3)$ described in Remark~\ref{rem:1}. We consider the dynamics \eqref{eq:quadrotor_model} with $m = 1 [\text{kg}]$, and $g= 9.81[\text{m/s$^2$}]$. The outer-loop controller given by \eqref{eq:double_integrator_control_law} has the following parameters: $k_p = 1$ and $k_v = 2$.
For the inner loop we consider the $\mathcal{S}^2 \times \mathcal{S}^1$ controller proposed in Section~\ref{sec:3_3} and \ref{sec:3_4} with $k_1 = 2.5$ and $k_2 = 4$, the quaternion controller with $k = 5$, and the  $\mathcal{S}\mathcal{O}(3)$ controller \eqref{eq:continuous_SO3_law} with $K = 5I$, $k_1 = 0.9$, $k_2 = 1$, $k_3 = 1.1$. Please note that the aim of this simulation example to show the behavior of the different controllers and not the performance.

\par We compare these controllers in two scenarios that illustrate how the quaternion-based control and control in $\mathcal{SO}(3)$ described  in Remark~\ref{rem:1} can cause undesired behavior. In both scenarios, we let $p_x(0), p_y(0) = 0$ and $\bar p_x(t) = 1, \bar p_y(t) = 0, \bar{\psi}(t)=0, \Forall t > 0$.  Similarly to $\bar{\psi}$, we define $\psi$ to be the angle between the projection of $r_1=\begin{bmatrix}r_{1,1} & r_{1,2} & r_{1,3}
\end{bmatrix}$ on the  $xy$-plane and $e_1$, i.e., $\psi=\text{atan2}(r_{1,2},r_{1,1})^\top$. In the simulation $r_1$ is never perpendicular to the $xy$-plane  so that $\psi$ can always be computed. We denote $\psi$ by the yaw angle. The results with   $\psi(0) = 0$ are shown in Fig.~\ref{fig:simulation_no_yaw} and with $\psi(0) = \pi-0.01$ are shown in Fig.~\ref{fig:simulation_yaw}.

\begin{figure}[t]
   \centering
   \includegraphics[width=1\linewidth]{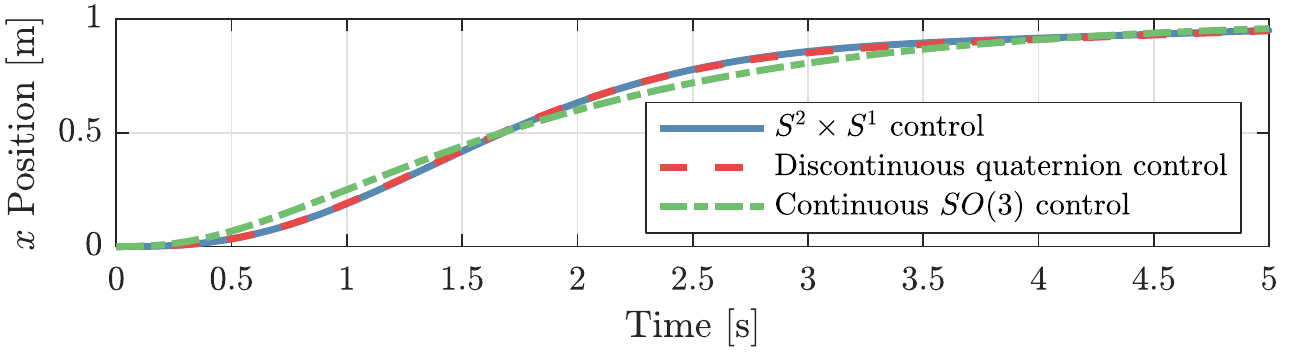} 
   \caption{Convergence of $x$ position from initial state $(x(0),y(0),\psi(0)) = (0,0,0)$ to reference setpoint $(1,0,0)$. For this initial state, position $y$  and yaw $\psi$ are zero for every $t$.}
   \label{fig:simulation_no_yaw}
\end{figure}
\begin{figure}[t] 
   \centering
   \includegraphics[width=1\linewidth]{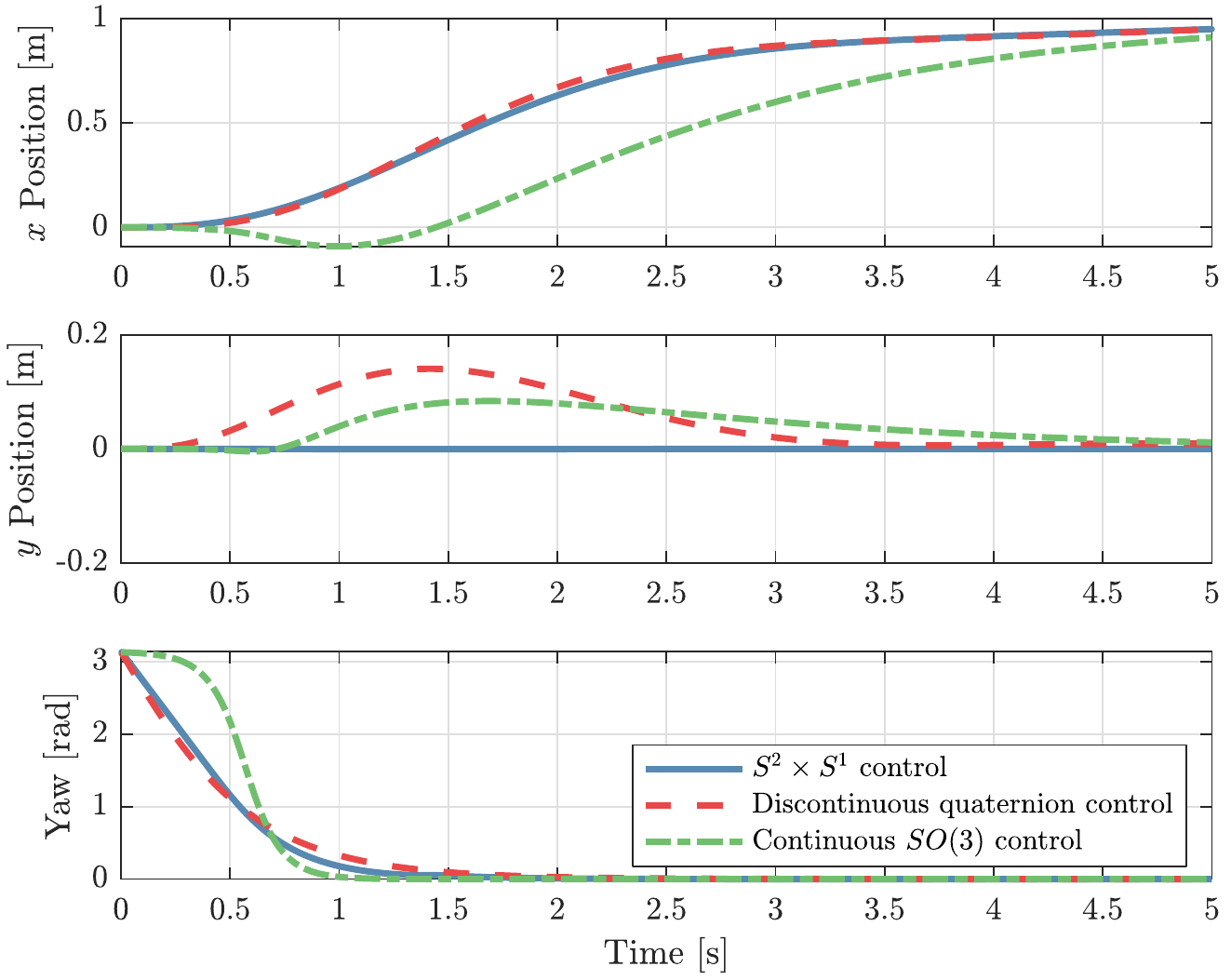} 
    \caption{Convergence of $x,y$ position and yaw angle from initial state $(x(0),y(0),\psi(0)) = (0,0,\pi-0.01)$ to reference setpoint $ (1,0,0)$. For this initial condition, $y$ is still zero for every $t$ for the proposed strategy, but the coupling between the control of the thrust vector and the yaw angle added in the other strategies leads to undesired position errors in $y$.}
   \label{fig:simulation_yaw}
\end{figure}

\par We can observe that the yaw error results in a position error in the $y$ direction when only movement in the $x$ direction is required for both the discontinuous quaternion control and continuous $\mathcal{SO}(3)$ control. In the proposed control law in $\mathcal{S}^2 \times \mathcal{S}^1$, the thrust vector is decoupled from the yaw and, therefore, a yaw error does not result in a position error.

\section{Conclusion}
\label{sec:conclusion}
We presented an attitude controller that achieves almost global asymptotic stability for the tracking error dynamics of a quadrotor, where the thrust vector is controlled in $S^2$ and the heading in $S^1$. By considering the decoupled control of the thrust vector and of the angle about this vector, we avoid undesired coupling of previous approaches. These advantages were illustrated in simulation. In future work, the stability of the overall closed-loop system, i.e., the quadrotor with the inner- and outer-loop controllers should be addressed.

\section{Acknowledgements}
\par The authors are thankful to A. A. J. (Erjen) Lefeber for several comments which have helped us improving the paper.


\bibliographystyle{IEEEtran}
\bibliography{root.bib}



\end{document}